\documentclass[article]{IEEEtran}
\usepackage{microtype}
\usepackage{graphicx}
\usepackage{subcaption}
\usepackage{booktabs}
\usepackage{hyperref}
\usepackage{amsmath, amssymb, amsfonts}
\usepackage{mathtools}
\usepackage{amsthm}
\usepackage{array}
\usepackage{multirow}
\usepackage{tabularx}
\usepackage{threeparttable}
\usepackage{algorithm}
\usepackage{algorithmic}
\usepackage{url}
\usepackage{textcomp}
\usepackage{xcolor}
\usepackage{siunitx}
\theoremstyle{plain}
\newtheorem{theorem}{Theorem}[section]
\newtheorem{proposition}[theorem]{Proposition}

\theoremstyle{definition}
\newtheorem{definition}[theorem]{Definition}

\theoremstyle{remark}

\usepackage{amsthm}

\usepackage{natbib}

\begin{document}

\author{Randolph Wiredu-Aidoo}

\title{The Mass Agreement Score: A Point-centric Measure of Cluster Size Consistency}

\maketitle

\begin{abstract}
In clustering, strong dominance in the size of a particular cluster is often undesirable, motivating a measure of cluster size uniformity that can be used to filter such partitions. A basic requirement of such a measure is stability: partitions that differ only slightly in their point assignments should receive similar uniformity scores. A difficulty arises because cluster labels are not fixed objects; algorithms may produce different numbers of labels even when the underlying point distribution changes very little. Measures defined directly over labels can therefore become unstable under label-count perturbations. I introduce the \textit{Mass Agreement Score (MAS)}, a point-centric metric bounded in $[0, 1]$ that evaluates the consistency of expected cluster size as measured from the perspective of points in each cluster. Its construction yields fragment robustness by design, assigning similar scores to partitions with similar bulk structure while remaining sensitive to genuine redistribution of cluster mass.
\end{abstract}

\section{Introduction}

Clustering algorithms often produce partitions whose clusters vary widely in size. In many applications, however, strong dominance by a single cluster is undesirable, as the algorithm has largely failed to uncover structure beyond the trivial single-cluster solution. For this reason, cluster size uniformity is often used as a heuristic signal when evaluating or filtering clustering outputs. A useful uniformity measure should therefore assign high scores to partitions whose clusters carry comparable mass and low scores to partitions dominated by a single cluster.

Beyond this basic ordering, a uniformity measure should also behave smoothly: partitions that differ only slightly in their point assignments should receive similar scores. This stability requirement is natural for any quantitative measure. In clustering, it is particularly important because algorithms may produce partitions that differ only by the reassignment of a small number of points, or by the subdivision of small residual clusters, while leaving the bulk structure essentially unchanged.

A challenge arises because clustering labels are not fixed objects. Unlike supervised classification, cluster identities carry no intrinsic meaning and their number may vary between partitions. Small changes in point assignments can therefore produce different numbers of cluster labels even when the underlying point distribution changes very little. Measures defined directly over cluster labels implicitly treat labels as fundamental objects and weight them equally, which can lead to instability when label counts fluctuate. In such cases, partitions with nearly identical bulk structure may receive widely different scores simply because small clusters are subdivided or merged.

In contrast, similarity measures commonly used in clustering evaluation (such as the Adjusted Rand Index) operate on relationships between points rather than on cluster labels themselves. Because they aggregate over point pairs, their behavior is naturally weighted by cluster mass: clusters containing few points contribute little to the overall measure. This perspective suggests that uniformity should likewise be evaluated from the standpoint of points rather than labels.

I introduce the \emph{Mass Agreement Score} (MAS), a bounded measure of cluster size uniformity derived from a point-centric, self-consistency perspective. Rather than asking whether cluster labels are uniformly distributed, MAS averages the size-consistency between each point's cluster size and the expected cluster size of the non-member distribution. This construction aligns the notion of uniformity with the mass-weighted structure that governs similarity between partitions.

The resulting measure exhibits several useful properties. MAS is bounded in $[0,1]$, attains its maximum when cluster masses are equal, and is stable under repartitioning of negligible mass. Consequently, partitions sharing the same bulk structure occupy a narrow score neighborhood, while genuine redistribution of mass among large clusters results in a significant movement of the score.

\section{Related Work}

\subsection{Alternative Measures}

\paragraph{Entropy-based measures.}
A natural approach to quantifying how evenly mass is distributed across clusters is Shannon entropy~\cite{shannon1948},
\[
H = -\sum_i p_i \log p_i,
\]
and its R\'enyi generalizations~\cite{renyi1961}. These measures capture dispersion over cluster labels and are maximized by the uniform distribution. However, their extrema depend explicitly on the number of clusters $K$. As a result, normalized entropy scores can shift when negligible clusters are split or merged, while unnormalized entropy grows in scale simply due to higher cluster count rather than genuine increase in uniformity. 

\paragraph{Concentration indices and the HHI.}
The Herfindahl--Hirschman Index (HHI)~\cite{rhoades1993}, computed as $\sum_j (n_j/N)^2$ for cluster sizes $n_j$, is widely used in economics, ecology (as the Simpson index), and political science. It is robust to fragmentation of negligible mass: a cluster of size $\varepsilon N$ affects the index by at most $O(\varepsilon)$. However, the minimum value of HHI depends on $K$, attaining $1/K$ under perfect uniformity. Normalized variants,
\[
\mathrm{HHI}^* = \frac{\mathrm{HHI} - 1/K}{1 - 1/K},
\]
restore a $[0,1]$ scale but retain this dependence: as $K$ grows through fragmentation of small clusters, $1/K \to 0$ and $\mathrm{HHI}^*$ approaches the unnormalized index. 

\paragraph{Inequality measures.}
The literature on income inequality provides a broad class of dispersion measures, including the Gini coefficient~\cite{gini1912}, Atkinson indices \citep{atkinson1970}, and generalized entropy measures. These operate on the full distribution of values across individuals. They can be used to measure a notion of uniformity through inversion (e.g. $1 - \mathrm{Gini}$), however they retain sensitivity to the category count even under movement of negligible mass. This is a desirable property for the fields within which they are often used, but is misaligned with the unstable nature of label count in clustering.

\paragraph{Cluster validity indices.}
Internal validation criteria such as Silhouette width \citep{rousseeuw1987}, the Dunn index \citep{dunn1974}, and the Davies--Bouldin index~\cite{davies1979} evaluate clusterings using geometric notions of compactness and separation. They do not treat size uniformity as a primary objective. As such, they may reward partitions that are geometrically coherent but uninformative due to the fusion of a vast majority of points into a single cluster. This motivates a degree of uniformity in cluster sizes as secondary constraint on optimal partitions.

\paragraph{Size-biased sampling.}
The perspective underlying MAS aligns with classical results on size-biased (or length-biased) sampling \citep{patilrao1978, cox1969}. When sampling individuals rather than groups, larger groups are encountered with probability proportional to their size, a manifestation of the inspection paradox. The quantity
\[
S = \sum_j \frac{n_j^2}{N}
\]
is precisely the mean group size under this sampling scheme. MAS adopts this viewpoint as its baseline, yielding a measure whose robustness to fragmentation follows directly from point-level weighting.

\subsection{Positioning MAS}

MAS is most closely related to the HHI measure, similarly leveraging size-biased sampling to enforce a point-centric perspective. In particular, MAS uses size-biased sampling of cluster size,
\[
S = \sum_j \frac{n_j^2}{N},
\]
to represent typical cluster size as experienced by a uniformly sampled point.

MAS then departs from this baseline in two key ways: First, it replaces the global quantity with a leave-one-out version
\[
S_i = \frac{\sum_{j} n_j^2 - n_i^2}{N - n_i},
\]
so that each cluster is evaluated against a size baseline absent of their own influence. Second, it aggregates agreements between cluster sizes and leave-one-out baselines at the level of points, weighting by $n_i/N$. This results in a uniformity measure that is insensitive to fragmentation of negligible mass by construction.

In contrast to entropy and inequality measures, MAS does not depend explicitly on the number of clusters, and unlike geometric validity indices, it operates purely on cluster sizes. 

\section{Proposed Method}

\subsection{Setup}

Consider a dataset of $N$ points partitioned into $K$ clusters with sizes $n_1, \dots, n_K$, satisfying
\[
  \sum_{i=1}^{K} n_i = N.
\]

\subsection{Point-Centric Baseline}

Select a point uniformly at random. The probability that it belongs to cluster $j$ is $n_j / N$, and it experiences a cluster of size $n_j$. The expected cluster size experienced by a random point is therefore
\[
  S = \sum_{j=1}^{K} \frac{n_j}{N} \cdot n_j = \frac{\sum_{j=1}^{K} n_j^2}{N}.
\]
This quantity reflects the typical cluster size as encountered by points rather than by cluster labels.

\subsection{Leave-One-Out Baseline}

To assess cluster $i$ for consistency against the rest of the distribution, a comparison baseline is constructed excluding cluster $i$. Conditioning on the remaining $N - n_i$ points, the expected cluster size they experience is
\[
  S_i = \frac{\sum_{j=1}^{K} n_j^2 - n_i^2}{N - n_i}.
\]
This value can be computed efficiently using a single precomputed global sum $\sum_j n_j^2$ and a per-cluster adjustment.

\subsection{Definition}

\begin{definition}[Mass Agreement Score]
For each cluster $i$, the normalized disagreement between its size and the leave-one-out baseline is
\[
  \frac{|n_i - S_i|}{N}.
\]
Aggregating point-centrically, the \emph{Mass Agreement Score} is
\[
  U = \sum_{i=1}^{K} \frac{n_i}{N} \left(1 - \frac{|n_i - S_i|}{N}\right).
\]
\end{definition}

\subsection{Range and boundary case}
For any partition with at least two nonempty clusters, one has $0<n_i<N$ and
\[
  S_i = \frac{\sum_{j\ne i} n_j^2}{N-n_i}.
\]
Since the complementary cluster sizes are positive and sum to $N-n_i$,
\[
  \sum_{j\ne i} n_j^2 \le (N-n_i)^2,
\]
implying $0 < S_i \le N-n_i < N$. Consequently $|n_i - S_i| < N$, so each per-cluster disagreement term satisfies
\[
  0 \le \frac{|n_i - S_i|}{N} < 1,
\]
and each agreement term lies in $(0,1]$. Because the weights $n_i/N$ are positive and sum to $1$, it follows that
\[
  U \in (0,1]
\]
for all partitions with $K \ge 2$.

The trivial one-cluster partition lies outside the domain of the leave-one-out baseline because the complementary distribution is empty. Since the attainable range for nontrivial partitions is $(0,1]$, we extend the definition by convention as $U=0$ for $K=1$, which naturally occupies the vacant endpoint of the scale. The extension is also consistent with the limiting behavior of highly
dominant partitions: for example, for $(N-1,1)$ one has $U \to 0$
as $N \to \infty$ (Section~\ref{prop:2_cluster_dominance}).

\begin{figure}
    \centering
    \includegraphics[width=0.9\linewidth]{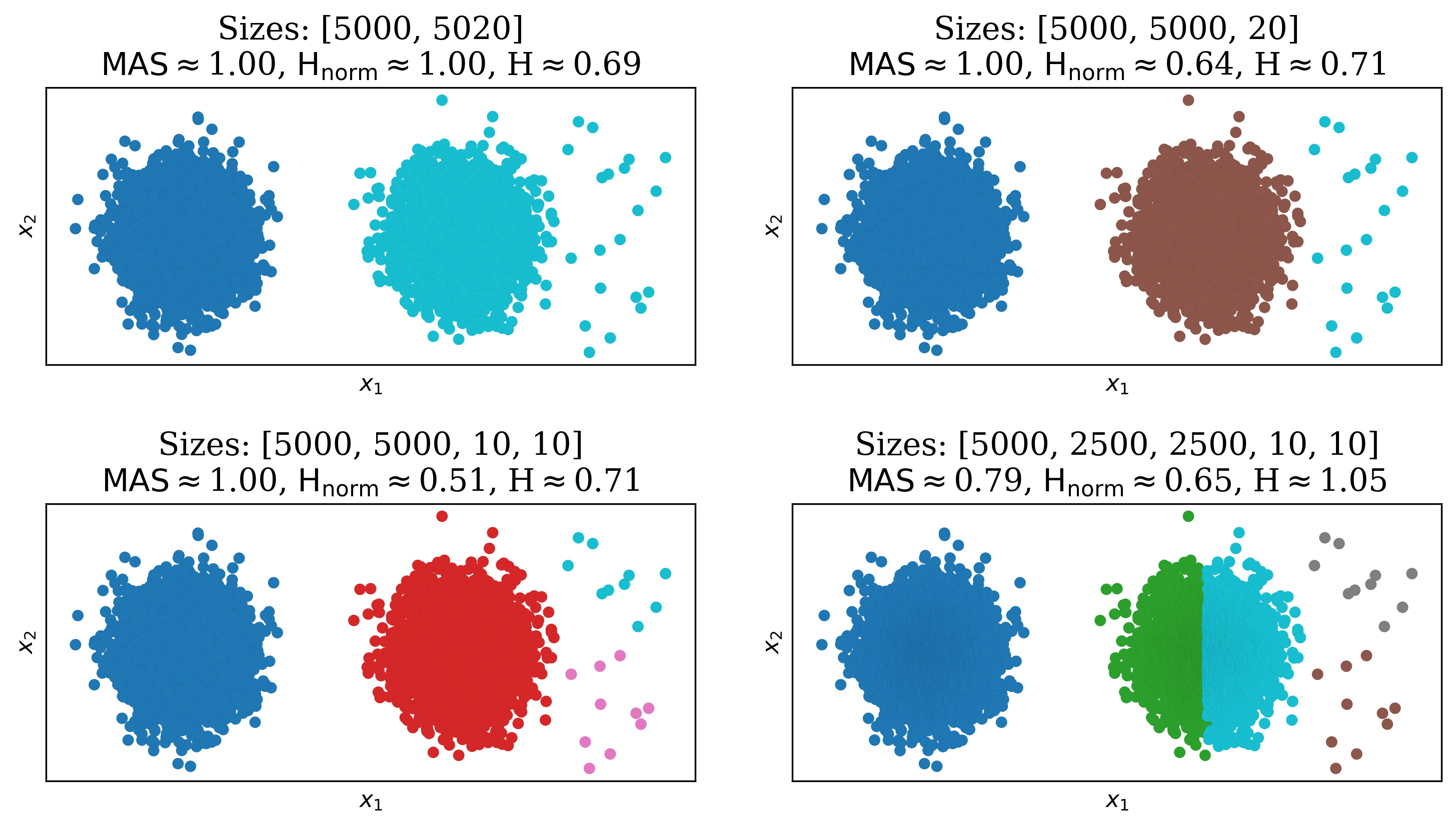}
    \caption{MAS and entropy ($\mathrm{H}$) values across four partitions of $N = 10{,}020$. MAS remains near $1.00$ across partitions with the same bulk structure, significantly reducing only when a large cluster ($\approx 0.5 N$) is split in half. Normalized entropy instead falls from $1.00$ to $0.51$ across similar partitions and rises to its second-highest value when the large cluster is split, demonstrating a label-centric focus. Unnormalized entropy primarily increases with label count, reflecting the larger entropy scale induced by an increase in label count.}
\end{figure}

\subsection{Interpretation}
The Mass Agreement Score measures the consistency of the cluster size distribution through the perspective of data points in each cluster. It averages the similarity between a point's cluster size and the average cluster size experienced by a non-member point, creating a statistic on cluster size consistency as measured through each data point. The agreement term 
\[ 
1 - \frac{|n_i - S_i|}{N} 
\] 
captures the notion of size similarity as an inverted $L_1$ discrepancy, normalized into a $(0, 1]$ range by the factor of $N$.

\subsection{Time Complexity}

Computation of MAS is $O(K)$ given cluster sizes. Three $O(K)$ passes suffice: (1) accumulate $Q = \sum_j n_j^2$; (2) compute each $S_i = (Q - n_i^2)/(N - n_i)$ in $O(1)$; (3) accumulate the weighted sum defining $U$. If cluster sizes must first be tallied from point-level assignments the cost is $O(N)$, matching a single scan over the data.

\subsection{Key Properties}
I prove key properties of the Mass Agreement Score, including its maximization at perfect uniformity of sizes, scale invariance, and stability under point reassignment in Appendix~\ref{app:key_props}.

\section{Experiments}

\subsection{Setup}

To characterize the behavior of MAS relative to alternative
measures, I conduct a series of experiments: two synthetic
experiments designed to isolate particular distributional regimes,
and one evaluation of uniformity measures embedded in a composite
hyperparameter scoring pipeline across eight datasets.

\paragraph{Experiment 1: progressive fragmentation of the small cluster.}
We begin with two clusters of size $4950$ and a small cluster of $100$. The small cluster is successively split into $2^i$ near-equal pieces for $i = 0, 1, \dots, 6$, and finally into 100 singletons. When 100 is not divisible by $2^i$, the remainder is distributed across the groups one point at a time. The two bulk clusters remain unchanged throughout.

\paragraph{Experiment 2: splitting a bulk cluster.}
Starting from the base partition, $[4950, 4950, 100]$, one of the large clusters is split in half, yielding cluster sizes $[4950, 2475, 2475, 100]$. We then consider splitting both large clusters, giving $[2475, 2475, 2475, 2475, 100]$. For comparison, the same splits are also evaluated without the small cluster present.

\paragraph{Experiment 3: uniformity measures in a composite hyperparameter scoring pipeline.}
To evaluate the practical utility of MAS in a realistic hyperparameter
selection setting, I embed each uniformity measure inside a composite
scoring function and assess how well the resulting scores rank clustering
outputs by quality. For a partition with uniformity score $v$, effective
cluster count $K_{\mathrm{eff}} = 1/\mathrm{HHI}$~\cite{laakso1979}, and silhouette score
$\mathrm{Sil} \in [-1, 1]$, the composite score is defined as
\begin{equation}
    \Phi(v) \;=\; v \cdot \left(1 - \log_N K_{\mathrm{eff}}\right)
    \cdot \frac{\mathrm{Sil} + 1}{2}.
\end{equation}
The factor $(1 - \log_N K_{\mathrm{eff}})$ penalizes excessive
fragmentation and $(\mathrm{Sil} + 1)/2$ rewards geometric compactness
and separation; the uniformity term $v$ is the sole point of variation
across conditions. I apply Spectral Clustering to eight datasets spanning
two to sixty dimensions: \textit{Aggregation} ($2\mathrm{D}$),
\textit{Moons} ($2\mathrm{D}$), \textit{Unbalance} ($2\mathrm{D}$),
\textit{Iris} ($4\mathrm{D}$), \textit{Banknote Authentication}
($4\mathrm{D}$), \textit{Wine} ($13\mathrm{D}$), \textit{WDBC}
($30\mathrm{D}$), and \textit{Sonar} ($60\mathrm{D}$), obtained from the Gagolewski repository~\cite{gagolewski2022framework}. For each dataset, I apply standardization and
sweep $k \in \{2, \dots, \max(\lfloor \log_2 N \rfloor, 10)\}$ with
$k$-nearest-neighbor affinity and $n_{\text{neighbors}} = \lfloor
\sqrt{N} \rfloor$. Critically, the ground-truth partition is included
among the candidates on each dataset. This ensures that the correct
partition is always attainable, so that partition ranking differences reflect the scorer's ability to identify quality partitions
rather than the absence of them from the candidate set.
Scoring quality is measured by the Pairwise Ranking Similarity (PWRS):
the fraction of partition pairs on which $\Phi(v)$ and the ARI agree in
their relative ordering, excluding tied pairs,
\begin{equation}
    \mathrm{PWRS} = \frac{
        \#\left\{(i,j) : i < j,\;
        (\mathrm{ARI}_j - \mathrm{ARI}_i)(\Phi_j - \Phi_i) > 0
        \right\}
    }{
        \#\left\{(i,j) : i < j,\;
        \mathrm{ARI}_j \neq \mathrm{ARI}_i \text{ and }
        \Phi_j \neq \Phi_i
        \right\}
    }.
\end{equation}
A PWRS of $1$ indicates perfect agreement with the ARI ranking; $0.5$
corresponds to chance. As a reference condition I include $v \equiv 1$
(Null Reference), which measures the baseline effectiveness of the
$K_{\mathrm{eff}}$-silhouette combination alone and against which any
uniformity measure must improve to justify its inclusion.

\paragraph{Included metrics.} In each experiment, the following measures are computed:

\begin{itemize}
\item Shannon entropy
\[
H = -\sum_i p_i \ln p_i
\]
and its normalized form $H / \ln K$.

\item Rényi-2 entropy
\[
H_2 = -\ln \sum_i p_i^2
\]
and its normalized form $H_2 / \ln K$.

\item The Herfindahl--Hirschman Index
\[
\mathrm{HHI} = \sum_i p_i^2,
\]
reported as a uniformity score $1 - \mathrm{HHI}$.

\item The normalized HHI
\[
\mathrm{HHI}^* = \frac{\mathrm{HHI} - \frac{1}{K}}{1 - \frac{1}{K}},
\]
reported as $1 - \mathrm{HHI}^*$.

\item The Gini coefficient 
\[
  \mathrm{Gini} = \frac{\displaystyle\sum_{i=1}^{K}\sum_{j=1}^{K} |n_i - n_j|}{2KN}
\]
inverted as $1 - \mathrm{Gini}$:

\item The Mass Agreement Score (MAS).
\end{itemize}

\subsection{Results}

\subsubsection*{\textbf{Experiment 1}}

\begin{table*}[ht]
\centering
\caption{Experiment 1: all uniformity measures under progressive fragmentation of the small cluster.
The two bulk clusters of size $4950$ remain fixed throughout; only the $100$-point cluster is split.}
\label{tab:exp1}
\begin{tabular}{lrrrrrrrrr}
\toprule
& & & \multicolumn{2}{c}{Entropy} & \multicolumn{2}{c}{R\'{e}nyi-2} & \multicolumn{2}{c}{HHI} & \\
\cmidrule(lr){4-5}\cmidrule(lr){6-7}\cmidrule(lr){8-9}
Configuration & $K$ & MAS & $H$ & $H/\ln K$ & $H_2$ & $H_2/\ln K$ & $1-\mathrm{HHI}$ & $1-\mathrm{HHI}^*$ & $1-\mathrm{Gini}$ \\
\midrule
$2^0 = 1$ piece   &   3 & 0.9856 & 0.7422 & 0.6756 & 0.7130 & 0.6490 & 0.5099 & 0.7648 & 0.6767 \\
$2^1 = 2$ pieces  &   4 & 0.9855 & 0.7491 & 0.5404 & 0.7131 & 0.5144 & 0.5099 & 0.6799 & 0.5100 \\
$2^2 = 4$ pieces  &   6 & 0.9855 & 0.7561 & 0.4220 & 0.7132 & 0.3980 & 0.5099 & 0.6119 & 0.3433 \\
$2^3 = 8$ pieces  &  10 & 0.9854 & 0.7630 & 0.3314 & 0.7132 & 0.3097 & 0.5099 & 0.5666 & 0.2098 \\
$2^4 = 16$ pieces &  18 & 0.9854 & 0.7699 & 0.2664 & 0.7132 & 0.2468 & 0.5099 & 0.5399 & 0.1208 \\
$2^5 = 32$ pieces &  34 & 0.9854 & 0.7768 & 0.2203 & 0.7132 & 0.2023 & 0.5099 & 0.5254 & 0.0685 \\
$2^6 = 64$ pieces &  66 & 0.9854 & 0.7833 & 0.1870 & 0.7132 & 0.1702 & 0.5099 & 0.5178 & 0.0388 \\
$100$ singletons  & 102 & 0.9854 & 0.7883 & 0.1704 & 0.7132 & 0.1542 & 0.5099 & 0.5150 & 0.0296 \\
\bottomrule
\end{tabular}
\end{table*}

The numerical results are reported in Table~\ref{tab:exp1}.

\paragraph{MAS.}
MAS is essentially unchanged across the entire sequence, moving only from $0.9856$ to $0.9854$. This is consistent with the design of the score: the two dominant clusters remain fixed, and the fragmented cluster contributes only a small fraction of the total mass.

\paragraph{R\'enyi-2 entropy and unnormalized HHI.}
The raw R\'enyi-2 entropy $H_2$ and the unnormalized score $1-\mathrm{HHI}$ are also nearly invariant, with $1-\mathrm{HHI}$ remaining constant up to four decimal places. Here, the value of $1-\mathrm{HHI}$ sits near 0.5, reflecting the effective cluster count of $K = 2$ primary clusters, rather than a degree of uniformity. The insensitivity of these measures is expected as they are determined by $\sum_i p_i^2$, which is dominated by the two clusters of size $4950$. Splitting the $1\%$ cluster changes that quadratic mass only marginally.

\paragraph{Shannon entropy.}
Raw Shannon entropy $H$ increases modestly, from $0.7422$ to $0.7883$, because it is sensitive to the introduction of additional low-probability categories. This increase is not large in absolute terms, but it reflects the fact that entropy treats fragmentation itself as added diversity.

\paragraph{Normalized measures and Gini.}
The normalized scores $H/\ln K$ and $H_2/\ln K$ decline steadily as $K$ increases, even though the bulk mass distribution is essentially unchanged. This behavior is driven by the growing denominator $\ln K$: the scale of the maximum attainable value grows as the number of clusters increases.

A similar effect appears for the normalized HHI score $1-\mathrm{HHI}^*$, which falls from $0.7648$ to $0.5150$. Here the change is induced by the $K$-dependent baseline $1/K$, which causes the score to converge toward the unnormalized $1-\mathrm{HHI}$ in the limit.

The score $1-\mathrm{Gini}$ decreases most sharply, from $0.6767$ to $0.0296$, because the cluster-size vector becomes increasingly unequal in count as the small cluster is split into many tiny pieces.

\paragraph{Summary of Experiment 1.}
Experiment~1 shows that MAS, raw R\'enyi-2 entropy, and unnormalized HHI are largely insensitive to fragmentation of negligible mass, whereas the normalized entropies, normalized HHI, and Gini-based score respond strongly to the increase in cluster count.

\subsection*{Experiment 2}

\begin{table*}[ht]
\centering
\caption{Experiment 2: all uniformity measures under bulk cluster splits,
with and without the small auxiliary cluster.}
\label{tab:exp2}
\begin{tabular}{lrrrrrrrrr}
\toprule
& & & \multicolumn{2}{c}{Entropy} & \multicolumn{2}{c}{R\'{e}nyi-2} & \multicolumn{2}{c}{HHI} & \\
\cmidrule(lr){4-5}\cmidrule(lr){6-7}\cmidrule(lr){8-9}
Configuration & $K$ & MAS & $H$ & $H/\ln K$ & $H_2$ & $H_2/\ln K$ & $1-\mathrm{HHI}$ & $1-\mathrm{HHI}^*$ & $1-\mathrm{Gini}$ \\
\midrule
\multicolumn{10}{l}{\textit{With small cluster}} \\
$[4950,\, 4950,\, 100]$          & 3 & 0.9856 & 0.7422 & 0.6756 & 0.7130 & 0.6490 & 0.5099 & 0.7648 & 0.6767 \\
$[4950,\, 2475,\, 2475,\, 100]$  & 4 & 0.7925 & 1.0853 & 0.7829 & 1.0007 & 0.7218 & 0.6324 & 0.8431 & 0.6362 \\
$[2475{\times}4,\, 100]$         & 5 & 0.9945 & 1.4284 & 0.8875 & 1.4060 & 0.8736 & 0.7549 & 0.9436 & 0.8100 \\
\midrule
\multicolumn{10}{l}{\textit{Without small cluster}} \\
$[4950,\, 4950]$                 & 2 & 1.0000 & 0.6931 & 1.0000 & 0.6931 & 1.0000 & 0.5000 & 1.0000 & 1.0000 \\
$[4950,\, 2475,\, 2475]$         & 3 & 0.7917 & 1.0397 & 0.9464 & 0.9808 & 0.8928 & 0.6250 & 0.9375 & 0.8333 \\
$[2475{\times}4]$                & 4 & 1.0000 & 1.3863 & 1.0000 & 1.3863 & 1.0000 & 0.7500 & 1.0000 & 1.0000 \\
\bottomrule
\end{tabular}
\end{table*}

The numerical results are reported in Tables~\ref{tab:exp2}.

Unlike Experiment~1, this experiment changes the large-scale mass distribution. Accordingly, all measures move appreciably. The relevant question is how well those movements reflect the intermediate imbalance created by splitting only one of the two large clusters.

\paragraph{MAS.}
MAS decreases sharply when one large cluster is split:
\[
0.9856 \to 0.7925
\]
with the small cluster present, and
\[
1.0000 \to 0.7917
\]
without it. When both large clusters are split, the score returns close to its maximum:
\[
0.9945 \quad\text{and}\quad 1.0000,
\]
respectively. Thus MAS distinguishes clearly between the asymmetric intermediate partition and the balanced endpoint, while remaining only weakly affected by the $1\%$ cluster.

\paragraph{Entropy measures.}
Where the small cluster is present, all entropies, normalized and unnormalized, increase at each split. This is expected from their definitions: introducing additional categories increases diversity, even when the intermediate partition is significantly less balanced in bulk of the partition. The normalized forms reduce the dependence on absolute scale and correctly equal $1$ at perfectly uniform partitions, however they still increase despite greater imbalance in the bulk of the distribution. 

\paragraph{HHI-based measures.}
Similar to the entropy measures, both $1-\mathrm{HHI}$ and $1-\mathrm{HHI}^*$ increase across splits with the presence of the small cluster, with $1-\mathrm{HHI}$ increasing without the small cluster as well. This illustrates their dependence on $K$. With $1-\mathrm{HHI}$, an equal split with two primary clusters ([$4950, 4950$]) gives $0.5000$, whereas the equal split with four clusters ([$2475 \times 4$]) gives $0.7500$. Thus the maximum score depends on $K$. The normalized version $1-\mathrm{HHI}^*$ partially corrects this scale issue and equals $1$ at uniform partitions for any $K$. However the presence of a small auxiliary cluster deflates its normalizing factor $\frac{1}{K}$, inflating the score. As a result, it increases even as the bulk of the partition becomes more imbalanced.

\paragraph{Gini-based score.}
The quantity $1-\mathrm{Gini}$ distinguishes the intermediate partition from the balanced cases, but its sensitivities are inconsistent: In the presence of the small cluster, it decreases by $\approx 0.04$ when a single bulk cluster is split, yet increases by $\approx 0.13$ from its value on the un-split partition after both bulk clusters split. The score increases significantly more for distributions with similar bulk uniformity than it decreases for heavy imbalance in the bulk of the distribution.

\paragraph{Summary of Experiment 2.}
Experiment~2 indicates that MAS responds strongly when substantial mass is redistributed, reaches its maximum at balanced partitions regardless of $K$, and remains comparatively insensitive to a small auxiliary cluster. The comparison measures capture related aspects of the partition, but several of them either depend strongly on cluster count, reward category proliferation, or vary noticeably in the presence of low-mass clusters.

\subsection{Experiment 3}

\begin{table*}[t]
\centering
\caption{Experiment 3: PWRS of each composite scorer $\Phi(v)$ across eight datasets.
Bold indicates the highest PWRS on each dataset. The Null Reference ($v \equiv 1$)
measures the baseline contribution of the $K_{\mathrm{eff}}$-silhouette factors alone.}
\label{tab:exp3_pwrs}
\setlength{\tabcolsep}{5pt}
\begin{tabular}{lcccccccc}
\toprule
& \multicolumn{3}{c}{2D} & \multicolumn{2}{c}{4D} & 13D & 30D & 60D \\
\cmidrule(lr){2-4}\cmidrule(lr){5-6}\cmidrule(lr){7-7}\cmidrule(lr){8-8}\cmidrule(lr){9-9}
Scorer $v$ & Aggr. & Moons & Unbal. & Iris & Banknote & Wine & WDBC & Sonar \\
\midrule
Null Reference   & 0.600 & 0.886 & 0.815 & 0.841 & 0.690 & 0.886 & 0.956 & 0.595 \\
\midrule
MAS              & \textbf{0.822} & \textbf{0.932} & \textbf{0.852} & \textbf{0.886} & \textbf{0.857} & \textbf{0.977} & 0.911 & \textbf{0.690} \\
$1 - \text{Gini}$      & 0.422 & 0.909 & 0.630 & 0.864 & 0.643 & 0.909 & \textbf{0.978} & 0.643 \\
$1 - \text{HHI}$       & 0.800 & 0.364 & 0.759 & 0.682 & 0.381 & 0.864 & 0.444 & 0.595 \\
$1 - \text{HHI}^*$     & 0.756 & 0.886 & 0.759 & 0.864 & 0.738 & 0.909 & \textbf{0.978} & 0.619 \\
$H$ (Entropy)          & 0.667 & 0.091 & 0.519 & 0.159 & 0.262 & 0.114 & 0.022 & 0.357 \\
$H/\ln K$              & 0.667 & 0.886 & 0.685 & 0.864 & 0.690 & 0.886 & 0.956 & 0.619 \\
$H_2$ (R\'{e}nyi-2)    & 0.667 & 0.091 & 0.500 & 0.159 & 0.286 & 0.114 & 0.044 & 0.381 \\
$H_2/\ln K$            & 0.556 & 0.909 & 0.648 & 0.864 & 0.762 & 0.909 & \textbf{0.978} & 0.643 \\
$K_{\mathrm{eff}}$     & 0.622 & 0.091 & 0.500 & 0.159 & 0.286 & 0.114 & 0.044 & 0.357 \\
\bottomrule
\end{tabular}
\end{table*}

\begin{table*}[t]
\centering
\caption{Experiment 3: ARI of the partition selected by each composite scorer $\Phi(v)$
(i.e.\ the partition assigned the highest score). The ground-truth partition is included
in the candidate set on every dataset, so a score of $1.00$ indicates the scorer
successfully identified it as top-ranked.}
\label{tab:exp3_ari}
\setlength{\tabcolsep}{5pt}
\begin{tabular}{lcccccccc}
\toprule
& \multicolumn{3}{c}{2D} & \multicolumn{2}{c}{4D} & 13D & 30D & 60D \\
\cmidrule(lr){2-4}\cmidrule(lr){5-6}\cmidrule(lr){7-7}\cmidrule(lr){8-8}\cmidrule(lr){9-9}
Scorer $v$ & Aggr. & Moons & Unbal. & Iris & Banknote & Wine & WDBC & Sonar \\
\midrule
Null Reference   & 0.31 & 0.66 & 0.60 & 0.57 & 0.02 & 0.45 & 0.78 & -0.00 \\
\midrule
MAS              & \textbf{0.99} & \textbf{1.00} & \textbf{1.00} & \textbf{1.00} & \textbf{1.00} & \textbf{0.85} & \textbf{1.00} & \textbf{1.00} \\
$1 - \text{Gini}$      & 0.31 & \textbf{1.00} & 0.99 & 0.57 & \textbf{1.00} & \textbf{0.85} & \textbf{1.00} & \textbf{1.00} \\
$1 - \text{HHI}$       & 0.92 & 0.34 & \textbf{1.00} & 0.48 & 0.19 & \textbf{0.85} & 0.29 & 0.04 \\
$1 - \text{HHI}^*$     & \textbf{0.99} & 0.66 & 0.99 & 0.57 & \textbf{1.00} & \textbf{0.85} & \textbf{1.00} & -0.00 \\
$H$ (Entropy)          & 0.66 & 0.20 & 0.46 & 0.32 & 0.17 & 0.30 & 0.15 & 0.02 \\
$H/\ln K$              & 0.31 & 0.66 & 0.99 & 0.57 & \textbf{1.00} & \textbf{0.85} & 0.78 & -0.00 \\
$H_2$ (R\'{e}nyi-2)    & 0.66 & 0.20 & 0.46 & 0.32 & 0.17 & 0.30 & 0.15 & 0.02 \\
$H_2/\ln K$            & 0.71 & 0.66 & 0.99 & 0.57 & \textbf{1.00} & \textbf{0.85} & \textbf{1.00} & \textbf{1.00} \\
$K_{\mathrm{eff}}$     & 0.66 & 0.20 & 0.46 & 0.32 & 0.17 & 0.30 & 0.15 & 0.02 \\
\bottomrule
\end{tabular}
\end{table*}

The numerical results are reported in Tables~\ref{tab:exp3_pwrs}--\ref{tab:exp3_ari}.
The Null Reference establishes how much ranking quality the
$K_{\mathrm{eff}}$-silhouette factors alone provide; any uniformity
measure must improve upon it to justify its inclusion. The relevant
questions are whether each measure consistently improves over this
baseline, and whether the top-ranked partition under each scorer
corresponds to a high-quality clustering.

\paragraph{MAS.}
MAS achieves the highest PWRS on seven of eight datasets, and the
top-ranked partition under MAS attains an ARI of $1.00$ on six datasets
and at least $0.85$ on all eight. Comparatively, MAS achieves the maximum PWRS and ARI among baselines in all but one case, where the ground truth partition is genuinely non-uniform. 

\paragraph{Raw label-sensitive measures.}
The unnormalized measures $K_{\mathrm{eff}}$, $H$, and $H_2$ perform
poorly throughout, with PWRS falling below $0.5$ on several datasets
and the top-ranked partition rarely exceeding an ARI of $0.20$ outside
the Aggregation dataset. Because these measures increase with the
introduction of additional cluster labels regardless of mass
distribution, they systematically reward fragmented partitions and
often underperform even the Null Reference. This is consistent
with the behavior documented in Experiments~1 and~2.

\paragraph{Normalized measures.}
The normalized variants $H/\ln K$, $H_2/\ln K$, and $1-\mathrm{HHI}^*$
are more competitive, achieving high PWRS on several datasets and
selecting the ground-truth partition on a subset of them. However their
performance is uneven: $H/\ln K$ and $1-\mathrm{HHI}^*$ fail to improve
over the Null Reference on Sonar, and $H_2/\ln K$ selects a suboptimal
partition on Aggregation. Their sensitivity to the normalizing factor
$\ln K$ or $1/K$ introduces instability in the presence of small clusters and inconsistency in the penalization of imbalance, causing imbalanced influence from the other terms in $\phi$.

\paragraph{HHI-based and Gini-based scores.}
The unnormalized score $1-\mathrm{HHI}$ is inconsistent across datasets:
it achieves a PWRS of $0.800$ on Aggregation but falls to $0.364$ on
Moons and $0.444$ on WDBC, well below the Null Reference in both cases.
The score $1-\mathrm{Gini}$ is more stable but still underperforms MAS in both PWRS and ARI-accuracy of the highest rank configuration due to irregularities in its sensitivity to imbalance.

\paragraph{Summary of Experiment~3.}
Across eight datasets and 2--60 dimensions, MAS consistently improves over the Null Reference and other measures in both ranking quality and top-partition selection. Label-sensitive measures
degrade or invert the baseline ranking on multiple datasets, confirming
MAS's bulk-attentive self-consistency formulation can improve model selection. Normalized variants recover much of this loss
but remain susceptible to a bias toward inflated cluster count. MAS avoids
both failure modes, and its top-ranked partition coincides with or
closely approximates the ground-truth partition on every dataset in
the sweep.

\section{Conclusion}
The Mass Agreement Score (MAS) is a bounded measure of cluster size uniformity
derived from a point-centric perspective of distributional self-consistency. Rather
than evaluating uniformity purely as a property of cluster labels, MAS evaluates
the distribution as it is experienced by individual data points. Each cluster
contributes in proportion to the fraction of points it contains, and its size is
compared to the size-biased mean of the complementary distribution obtained by
excluding that cluster. These two structural choices, mass weighting and
self-consistency baselining, directly produce several desirable properties,
including sensitivity to dominant clusters, robustness to fragmentation of
negligible mass, invariance to relabeling, and scale invariance.

The experimental results illustrate the practical implications of this perspective.
Fragmenting a cluster containing only $1\%$ of the data into many pieces changes
MAS by less than $0.0003$, indicating that negligible mass has minimal influence on
the score. In contrast, redistributing substantial mass produces large and
interpretable changes: splitting a cluster containing roughly half the data lowers
MAS by $\approx 0.21$, reflecting the emergence of a strongly asymmetric partition
before balance is restored. In a composite hyperparameter scoring pipeline across
eight datasets spanning two to sixty dimensions, MAS consistently improves model selection decisions relative to all baselines in both
pairwise ranking quality and top-partition selection. Its performance illustrates how its fragmentation robustness and size-consistency notion of uniformity can guide optimization towards accurate recovery of ground truth.

Comparison with entropy-based and concentration-based measures clarifies where MAS
sits among existing tools. Shannon entropy is sensitive to the introduction of
low-probability clusters even when dominant mass structure is unchanged; HHI-style
measures avoid that sensitivity but remain dependent on cluster count in ways that
produce inconsistent scores for similarly uniform partitions. MAS avoids both
failure modes because its value is determined entirely by relative cluster sizes,
not by the number of labels used to represent them.

MAS therefore addresses a question distinct from that of classical diversity
measures: not how evenly mass is spread across categories, but how consistently
clusters are sized from the perspective of the points within them.

Several directions remain open. The pipeline experiment demonstrates that MAS
contributes reliable marginal value as a uniformity component in a composite scorer,
and its bounded range and interpretable behavior under redistribution make it a
natural candidate for further exploration in multistage or multiobjective
hyperparameter optimization. Beyond clustering, MAS's distributional
self-consistency construction could be applied to any probability distribution over
discrete categories, suggesting potential use wherever value consistency of a
partition is of interest.

\bibliographystyle{plain}
\bibliography{references}

\appendices
\section{Key Properties}
\label{app:key_props}
In this section, I prove several properties of the Mass Agreement Score, including maximization strictly at equal cluster sizes, scale invariance, and stability under point reassignment.

\subsection{Maximum Achieved at Equal Cluster Sizes}
\begin{proposition}
$U = 1$ if and only if all clusters have equal size.
\end{proposition}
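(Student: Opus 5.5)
Since the weights $n_i/N$ are positive and sum to $1$, we can write
\[
1-U=\sum_{i=1}^K \frac{n_i}{N}\cdot\frac{|n_i-S_i|}{N},
\]
a sum of nonnegative terms with strictly positive weights. So $U=1$ holds if and only if $n_i=S_i$ for every cluster $i$. The proposition therefore reduces to a purely algebraic statement: the fixed-point system $n_i=S_i$ for all $i$ holds exactly when all $n_i$ are equal. We work on the domain $K\ge 2$ where $U$ is defined by the formula. When $K=1$ we have $U=0$ by convention, so the equivalence is only meaningful for $K\ge2$, and the statement should be read with that restriction.

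\emph{Sufficiency.} If $n_i=N/K$ for all $i$, then
\[
S_i=\frac{(K-1)(N/K)^2}{N-N/K}=\frac{N}{K}=n_i,
\]
so every disagreement term vanishes and $U=1$.

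\emph{Necessity.} Suppose $n_i=S_i$ for all $i$. Put $Q=\sum_j n_j^2$. Then each $i$ satisfies
\[
n_i(N-n_i)=Q-n_i^2,
\]
which simplifies to $n_iN=Q$. So every $n_i$ equals $Q/N$, and all clusters have equal size. Each step is reversible, so this also recovers sufficiency directly.

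The argument is short, and I expect no real obstacle. The only points needing care are that every weight $n_i/N$ is strictly positive, which holds because clusters are nonempty, and that the case $K=1$ is excluded. I will state both explicitly so that vanishing of $1-U$ forces each individual term to vanish.
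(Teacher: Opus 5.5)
Your proof is correct and follows essentially the same route as the paper: since the weights are positive and sum to one, $U=1$ forces $n_i=S_i$ for every cluster, sufficiency is checked by computing $S_i=N/K$, and necessity follows from $n_i N=Q$. Your explicit restriction to $K\ge 2$ is a refinement the paper omits. Under the convention $U=0$ for $K=1$, the trivially equal one-cluster partition would otherwise violate the ``if'' direction, and the paper's own computation divides by $K-1$.
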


\begin{proof}
$(\Leftarrow)$
Since $n_i = N/K$ for all $i$,  
\[
S_i = \frac{K(N/K)^2 - (N/K)^2}{N - N/K} = \frac{(K-1)(N/K)^2}{(K - 1)N/K} =
\]
\[
N/K = n_i.
\]
Thus, all agreement terms $1 - \frac{|n_i - S_i|}{N} = 1$, giving $U = 1$.

$(\Rightarrow)$ Since $U = 1$ and the weights sum to $1$ with each factor $\leq 1$, equality forces $|n_i - S_i| = 0$ for every $i$ with $n_i > 0$, meaning $n_i = S_i$. 
\[
  S_i = n_i \implies \frac{\sum_j n_j^2 - n_i^2}{N - n_i} = n_i \implies 
\]
\[
\sum_j n_j^2 - n_i^2 = n_i(N - n_i) = Nn_i - n_i^2 \implies 
\]
\[
\sum_j n_j^2 = Nn_i \implies n_i = \frac{\sum_j n_j^2}{N}
\]
Since the right-hand side is a constant, all $n_i$ are equal.
\end{proof}

\subsection{Shrinkage to Zero with Dominance}
\begin{proposition}
\label{prop:2_cluster_dominance}
Consider the dominated two-cluster partition $(N-1,1)$. As $N \to \infty$, the Mass Agreement Score satisfies
\[
U \to 0 .
\]
\end{proposition}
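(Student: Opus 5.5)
The plan is a direct computation from the definition of the Mass Agreement Score. With only two clusters, each leave-one-out baseline is just the size of the other cluster, so $U$ reduces to an explicit rational function of $N$, and the limit can then be read off.

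First, I will specialize the leave-one-out baseline to $K=2$. For $i=1$, with $n_1 = N-1$, the complementary distribution consists of the single cluster $n_2 = 1$. Hence
\[
S_1 = \frac{n_2^2}{N-n_1} = \frac{1}{1} = 1 .
\]
Symmetrically, for $i=2$,
\[
S_2 = \frac{n_1^2}{N-n_2} = \frac{(N-1)^2}{N-1} = N-1 .
\]
In general, for any two-cluster partition $S_1 = n_2$ and $S_2 = n_1$. This is consistent with the range bound $S_i \le N - n_i$ of the range-and-boundary discussion, which is attained with equality here.

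Second, I will compute the two disagreement terms. Both equal $|n_1 - n_2|/N = (N-2)/N$. So, assuming $N \ge 2$, both agreement terms equal
\[
1 - \frac{N-2}{N} = \frac{2}{N}.
\]
Since the weights $(N-1)/N$ and $1/N$ sum to $1$, it follows that
\[
U = \frac{2}{N}.
\]
This is an exact closed form, not merely an asymptotic estimate, and $U \to 0$ as $N \to \infty$ follows immediately.

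There is no real obstacle in this argument. The only point needing care is confirming that the leave-one-out formula is well defined: both clusters are nonempty and $N - n_i > 0$, so neither denominator vanishes. The $N=2$ case is a sanity check, since $U = 1$ there, matching the equal-size characterization of the previous proposition.
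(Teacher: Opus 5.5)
Your proposal is correct and follows essentially the same route as the paper: both compute $S_1 = 1$ and $S_2 = N-1$ directly from the leave-one-out formula, obtain the common agreement term $2/N$, and conclude $U = 2/N \to 0$. Your extra remarks ($S_1 = n_2$ and $S_2 = n_1$ for any two-cluster partition, and the $N=2$ check giving $U=1$) are correct but not needed for the result.
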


\begin{proof}
Let $n_1 = N-1$ and $n_2 = 1$. The squared-size sum is
\[
Q = (N-1)^2 + 1.
\]

For the large cluster,
\[
S_1 = \frac{Q - (N-1)^2}{N-(N-1)} = \frac{1}{1} = 1,
\]
so
\[
\frac{|n_1 - S_1|}{N} = \frac{N-2}{N}.
\]

For the singleton cluster,
\[
S_2 = \frac{Q - 1}{N-1} = \frac{(N-1)^2}{N-1} = N-1,
\]
so
\[
\frac{|n_2 - S_2|}{N} = \frac{N-2}{N}.
\]

Thus both clusters have the same agreement term
\[
1 - \frac{N-2}{N} = \frac{2}{N}.
\]

Since the weights sum to $1$, the Mass Agreement Score is
\[
U = \frac{2}{N}.
\]

Therefore $U \to 0$ as $N \to \infty$.
\end{proof}

\subsection{Cluster-Confined Reassignment Stability}
\begin{proposition}
Let $\mathcal{S} \subseteq [K]$ be a set of \emph{active} clusters with total mass
\[
m = \sum_{i \in \mathcal{S}} n_i = \mu N,
\qquad \mu \in [0,1].
\]
Assume that all point reassignments are confined to $\mathcal{S}$, so that
clusters in $\mathcal{L} := [K]\setminus \mathcal{S}$ are untouched. Then
\[
|\Delta U| \;\le\; \mu(2-\mu).
\]
\end{proposition}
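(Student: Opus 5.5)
The plan is to split $U$ into the contribution of the active clusters and that of the untouched clusters. I would bound the change of each part separately and then add the two bounds. Throughout I assume, as the statement implicitly does, that $N$ is fixed and both partitions have at least two nonempty clusters, so that the Range and boundary case section applies. Reassignments may create, empty or split clusters inside $\mathcal{S}$, but they cannot move mass into or out of $\mathcal{L}$. Hence the total active mass stays $m=\mu N$ before and after, and every $n_j$ with $j\in\mathcal{L}$ is unchanged. Write
\[
U = U_{\mathcal{S}} + U_{\mathcal{L}},\qquad U_{\mathcal{S}}=\sum_{i\in\mathcal{S}}\frac{n_i}{N}\Bigl(1-\frac{|n_i-S_i|}{N}\Bigr),\qquad U_{\mathcal{L}}=\sum_{j\in\mathcal{L}}\frac{n_j}{N}\Bigl(1-\frac{|n_j-S_j|}{N}\Bigr).
\]

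\emph{Active part.} By the Range and boundary case section, each agreement term lies in $(0,1]$. The active weights sum to $m/N=\mu$ both before and after the reassignment. So $U_{\mathcal{S}}\in[0,\mu]$ in both partitions, and therefore $|\Delta U_{\mathcal{S}}|\le\mu$. This step is crude but sufficient.

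\emph{Passive part.} For $j\in\mathcal{L}$ the weight $n_j/N$ and the size $n_j$ are fixed, so only $S_j=(Q-n_j^2)/(N-n_j)$ can move. Write $Q_{\mathcal{S}}=\sum_{i\in\mathcal{S}}n_i^2$. The only change in $Q$ comes from $Q_{\mathcal{S}}$, and since the active sizes are nonnegative with sum $m$, we have $0\le Q_{\mathcal{S}}\le m^2$. Hence $|\Delta Q|\le m^2$. By the reverse triangle inequality the agreement term of cluster $j$ changes by at most
\[
\frac{|\Delta S_j|}{N}\le \frac{m^2}{N(N-n_j)}.
\]
The key observation is that $N-n_j\ge m$, because the whole active mass lies outside cluster $j$. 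So this change is at most $m/N=\mu$. Weighting by $n_j/N$ and summing over $\mathcal{L}$, whose weights total $1-\mu$, gives $|\Delta U_{\mathcal{L}}|\le\mu(1-\mu)$.

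Adding the two bounds gives $|\Delta U|\le \mu+\mu(1-\mu)=\mu(2-\mu)$. The main obstacle is the passive estimate. The naive denominator $N-n_j$ could be small, and the fix is the inequality $N-n_j\ge m$. Two degenerate cases need a brief check. If $\mu=0$, nothing moves and the bound is trivial. If $\mathcal{L}$ is empty, then $\mu=1$, the passive part vanishes, and the bound becomes $|\Delta U|\le 1$, which holds because $U\in(0,1]$.
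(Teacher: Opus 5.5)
Your proof is correct and follows the paper's argument. It uses the same split into active and inactive contributions, the same crude bound $|\Delta U_{\mathcal{S}}|\le\mu$, and the same estimate $|\Delta Q|\le m^2$ passed through the reverse triangle inequality. The only difference is how you bound $\sum_{j\in\mathcal{L}} n_j/(N-n_j)$: the paper uses convexity of $x\mapsto x/(N-x)$ to argue the sum is maximized when all inactive mass sits in one cluster, whereas your pointwise observation $N-n_j\ge m$ gives the same $(1-\mu)/\mu$ in one line without that extremal argument.
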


\begin{proof}
Let $\mathcal{L} = [K]\setminus \mathcal{S}$ denote the complementary set of
\emph{inactive} clusters. Since no reassignment involves clusters in
$\mathcal{L}$, their sizes $n_i$ remain fixed.

Decompose the utility shift as
\[
\Delta U = \Delta U_{\mathcal{S}} + \Delta U_{\mathcal{L}}.
\]

\medskip
\noindent\textbf{Contribution from the active clusters.}
Because the total weight of clusters in $\mathcal{S}$ is exactly $\mu$, both
$U_{\mathcal{S}}$ and $U_{\mathcal{S}}'$ lie in the interval $[0,\mu]$.
Indeed, the coefficients $n_i/N$ over $i\in\mathcal{S}$ sum to $\mu$, while
each corresponding agreement term is at most $1$. Hence
\[
|\Delta U_{\mathcal{S}}|
\;\le\;
\mu.
\]

\medskip
\noindent\textbf{Contribution from the inactive clusters.}
For each $i \in \mathcal{L}$, the weight $n_i/N$ is unchanged, so only the
agreement term can vary. Since
\[
S_i = \frac{Q - n_i^2}{N - n_i},
\]
and $n_i$ is fixed for $i\in\mathcal{L}$, we have
\[
|\Delta S_i|
=
\frac{|\Delta Q|}{N-n_i}.
\]
Therefore
\[
|\Delta a_i|
=
\frac{|\Delta |n_i-S_i||}{N}
\;\le\;
\frac{|\Delta S_i|}{N}
=
\frac{|\Delta Q|}{N(N-n_i)}.
\]

It remains to control $|\Delta Q|$. Since all reassignments are confined to
$\mathcal{S}$, the only part of
\[
Q = \sum_{j=1}^K n_j^2
\]
that can change is
\[
Q_{\mathcal{S}} = \sum_{j\in\mathcal{S}} n_j^2.
\]
Because the total mass in $\mathcal{S}$ is $m$, we have
\[
Q_{\mathcal{S}}
\le
\Bigl(\sum_{j\in\mathcal{S}} n_j\Bigr)^2
=
m^2,
\]
and the same bound holds after reassignment. Thus
\[
|\Delta Q|
=
|Q_{\mathcal{S}}' - Q_{\mathcal{S}}|
\le
m^2
=
\mu^2 N^2.
\]

Substituting this into the bound for the inactive contribution gives
\[
|\Delta U_{\mathcal{L}}|
\;\le\;
\sum_{i\in\mathcal{L}} \frac{n_i}{N}\, |\Delta a_i|
\;\le\;
\mu^2 \sum_{i\in\mathcal{L}} \frac{n_i}{N-n_i}.
\]

Now the function $x \mapsto x/(N-x)$ is increasing and convex on $(0,N)$, so
under the constraint
\[
\sum_{i\in\mathcal{L}} n_i = (1-\mu)N,
\]
the sum
\[
\sum_{i\in\mathcal{L}} \frac{n_i}{N-n_i}
\]
is maximized when all inactive mass is concentrated in a single cluster. This
yields
\[
\sum_{i\in\mathcal{L}} \frac{n_i}{N-n_i}
\;\le\;
\frac{(1-\mu)N}{N-(1-\mu)N}
=
\frac{1-\mu}{\mu}.
\]
Hence
\[
|\Delta U_{\mathcal{L}}|
\;\le\;
\mu^2 \cdot \frac{1-\mu}{\mu}
=
\mu(1-\mu).
\]

\medskip
\noindent\textbf{Conclusion.}
Combining the active and inactive contributions,
\[
|\Delta U|
\;\le\;
|\Delta U_{\mathcal{S}}| + |\Delta U_{\mathcal{L}}|
\;\le\;
\mu + \mu(1-\mu)
=
\mu(2-\mu).
\]
\end{proof}

\subsection{General Reassignment Stability}
\begin{proposition}
For $K \ge 2$ nonempty clusters, $\epsilon \in [0,1]$, reassigning $\epsilon N$ points perturbs
the score by at most $3\epsilon$:
\[
  |\Delta\,U| \;<\; 3\epsilon.
\]
\end{proposition}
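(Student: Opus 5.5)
The plan is to reduce the statement to the effect of moving a \emph{single} point, and then sum. Moving $\epsilon N$ points can be done as a sequence of $\epsilon N$ single-point moves. By the triangle inequality it then suffices to show that one move changes $U$ by strictly less than $3/N$; summing gives $|\Delta U| < 3\epsilon N/N = 3\epsilon$. Two details need care. Intermediate partitions in the chain should stay in the domain $K\ge 2$; this can be arranged by ordering the moves suitably. Any move that empties a cluster is harmless, since that cluster's weight simply becomes $0$. The degenerate case where the final partition has $K=1$ is handled separately through the convention $U=0$.

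For a single move of one point from cluster $a$ to cluster $b$, write $U=\sum_i w_i a_i$ with $w_i=n_i/N$ and $a_i = 1-|n_i-S_i|/N$. I would use the identity
\[
n_i - S_i = \frac{N n_i - Q}{N-n_i},
\]
so that $a_i = 1 - \frac{|Nn_i - Q|}{N(N-n_i)}$. The change then splits as
\[
\Delta U = \sum_i \Delta w_i\, a_i' + \sum_i w_i\, \Delta a_i .
\]
The first sum involves only $i\in\{a,b\}$, each with $|\Delta w_i| = 1/N$ and $0<a_i'\le 1$. It is therefore bounded by $2/N$; more precisely it equals $(a_b'-a_a')/N$, which is at most $1/N$ in absolute value because both agreement terms lie in $(0,1]$. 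I would keep this sharper form in reserve in case the constants are tight.

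For the second sum, first take the untouched clusters $i\notin\{a,b\}$. There only $Q$ moves, with
\[
\Delta Q = 2(n_b - n_a + 1).
\]
Hence $|\Delta a_i| \le |\Delta Q|/(N(N-n_i))$. Since $N-n_i \ge n_a+n_b \ge |n_b-n_a|$, up to the $+1$ correction, each weighted term is about $2n_i/N^2$. Summing over these clusters gives roughly $2(N-n_a-n_b)/N^2 < 2/N$.

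For the two touched clusters $a$ and $b$, I would bound $|\Delta a_i|$ directly from the closed form. Both the numerator $Nn_i - Q$ and the denominator $N-n_i$ shift by $O(N)$ and $O(1)$ respectively. Multiplied by the weights $n_a/N$ and $n_b/N$, whose sum is at most $1$, the contribution should be $O((n_a+n_b)/N^2)$.

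The main obstacle is the constant bookkeeping. The crude pieces $1/N$, $2(N-n_a-n_b)/N^2$ and $c(n_a+n_b)/N^2$ must combine to strictly less than $3/N$. This requires the touched-cluster contribution to have constant $c\le 2$, so that the two $O(1/N)$ pieces together stay below $2/N$. The difficult region is when $n_a$ or $n_b$ is close to $N$, because $N-n_i$ in the denominator becomes small there. My first attempt would be to treat the dominant-cluster case separately, using the explicit form of $a_i$ as in Proposition~\ref{prop:2_cluster_dominance}, where $N-n_i$ is small but so is $|Nn_i - Q|$. If the per-move constant turns out to slightly exceed $3$ in some edge case, the fallback is to exploit the exact sign structure of $(a_b'-a_a')/N$ rather than the crude $2/N$ bound.
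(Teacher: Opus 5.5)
\textbf{Gap: the touched-cluster bound.} Your reduction to a single move, the bystander estimate via $\Delta Q = 2(n_b-n_a+1)$, and the endpoint term $(a_b'-a_a')/N$ all match the paper's proof. Your old-weight split $\sum_i \Delta w_i\,a_i' + \sum_i w_i\,\Delta a_i$ is only a different bookkeeping of the paper's $(n_A-1)(a_A'-a_A) + n_B(a_B'-a_B) + (a_B'-a_A)$. The gap is the step you yourself call the main obstacle. You never establish $|\Delta a_i| \le 2/N$ for the two touched clusters, and your heuristic for it fails exactly where it is needed.

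Write $a_a = 1 - |X|/Y$ with $X = Nn_a - Q$ and $Y = N(N-n_a)$, and set $d = N-n_a$. The numerator moves by order $N$, and the denominator goes from $Nd$ to $N(d+1)$. Since $|X|$ can be as large as $Nd$, a generic perturbation estimate of the ratio gives only $|\Delta a_a| = O(1/(d+1))$. That is $O(1)$, not $O(1/N)$, when $n_a$ is close to $N$, and that cluster carries weight close to $1$. Note also that $|Nn_a - Q|$ is not small in this regime; it is of order $N(N-n_a)$. For example, in $(N-1,1)\to(N-2,2)$ the denominator doubles, yet $a_a$ moves only from $2/N$ to $4/N$. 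That is a cancellation an ``$O(N)$ over $O(1)$'' count cannot see. Treating only the literal $(N-1,1)$ configuration does not cover general partitions with $N-n_a$ small. Your sign-structure fallback concerns the endpoint term, not this one.

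\textbf{The missing idea.} Bound the change in the leave-one-out baseline $S_i$ rather than in $a_i$, using the constraints its numerator obeys.
\begin{itemize}
\item \emph{Cluster $a$.} With $p = Q - n_a^2$, one computes exactly $\Delta S_a = \frac{d(2n_b+1)-p}{d(d+1)}$. The constraint $p \ge n_b^2$ and completing the square bound the numerator above by $d(d+1) - (n_b-d)^2$. The constraint $p \le d^2$ bounds it below by $-d(d-1)$. Hence $|\Delta S_a| \le 1$ uniformly in $d$.
\item \emph{Cluster $b$.} With $q = Q - n_b^2$ and $e = N - n_b$, one has $\Delta S_b = \frac{q - e(2n_a-1)}{e(e-1)}$. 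The constraints $n_a^2 \le q \le e^2$ give $|\Delta S_b| \le 1$. The case $e=1$ is the move $(N-1,1)\to(N)$, where $|\Delta U| = 2/N$.
\end{itemize}
Since $|\Delta n_i| = 1$, this yields $\bigl|\Delta|n_i - S_i|\bigr| \le 2$, i.e.\ $|\Delta a_i| \le 2/N$. That is exactly the constant $c=2$ your assembly requires.

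\textbf{Strictness.} With your split you then get $\sum_i w_i |\Delta a_i| \le 2/N$, and you still need strictness from the endpoint term. It holds: $|a_b' - a_a'| < 1$ whenever the post-move partition has at least two nonempty clusters, since both terms then lie in $(0,1]$. The paper avoids this step by attaching weight $n_a - 1$ to cluster $a$, which gives $2(N-1)/N^2 + 1/N < 3/N$ directly.
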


\begin{proof}
It suffices to show that a single point reassignment perturbs $U$ by at most
$3/N$; the general bound then follows by the triangle inequality applied
$\epsilon N$ times, giving $\epsilon N \cdot (3/N) = 3\epsilon$.

\medskip
\noindent\textbf{Setup.}
Move one point from cluster $A$ to cluster $B$, where
$n_A \ge 1$ and $n_B \ge 0$. Clusters with $n_i = 0$ contribute weight $n_i/N = 0$ to $U$ and contribute
$n_i^2 = 0$ to $Q = \sum_k n_k^2$, so they are inert and we may freely allow
$n_B \ge 0$. 

The total $N$ is unchanged. The induced change
in $Q$ is
\[
  \Delta Q
  = (n_A-1)^2 + (n_B+1)^2 - n_A^2 - n_B^2
  = 2(n_B - n_A + 1).
\]

Write $a_i = 1 - |n_i - S_i|/N \in [0,1]$ for the agreement term of cluster
$i$, and $a_i'$ for its value after the move. The change in $U$ is
\[
  \Delta U
  = \sum_{\text{all } i}
    \left[
      \frac{n_i'}{N}a_i' - \frac{n_i}{N}a_i
    \right].
\]

\medskip
\noindent\textbf{Rearranging the endpoint terms.}
Separate bystanders and the two affected clusters:
\[
\begin{aligned}
\Delta U
&= \frac{1}{N}\Bigg[
    \sum_{i \notin \{A,B\}} n_i(a_i' - a_i) \\
&\qquad + (n_A-1)a_A' - n_A a_A \\
&\qquad + (n_B+1)a_B' - n_B a_B
\Bigg].
\end{aligned}
\]

Rewrite the last two terms:
\[
(n_A-1)a_A' - n_A a_A
= (n_A-1)(a_A' - a_A) - a_A,
\]
\[
(n_B+1)a_B' - n_B a_B
= n_B(a_B' - a_B) + a_B'.
\]

Thus
\[
\begin{aligned}
\Delta U
&= \frac{1}{N}\Bigg[
    \sum_{i \notin \{A,B\}} n_i(a_i' - a_i) \\
&\qquad + (n_A-1)(a_A' - a_A) \\
&\qquad + n_B(a_B' - a_B) \\
&\qquad + (a_B' - a_A)
\Bigg].
\end{aligned}
\]

\medskip
\noindent\textbf{Bystanders $i \notin \{A,B\}$.}
Since $n_i' = n_i$, the weight cancels and only the agreement term changes.
The only change to
\[
  S_i = \frac{Q - n_i^2}{N - n_i}
\]
comes through $\Delta Q$, giving
\[
  |\Delta S_i|
  = \frac{|\Delta Q|}{N - n_i}
  = \frac{2|n_B - n_A + 1|}{N - n_i}
  \le \frac{2|n_B - n_A + 1|}{n_A + n_B}.
\]
Since $n_A \ge 1$ and $n_B \ge 0$, one has
$|n_B - n_A + 1| < n_A + n_B$, hence $|\Delta S_i| < 2$.
Therefore $|\Delta|n_i - S_i|| \le |\Delta S_i| < 2$, giving
\[
  |a_i' - a_i| < \frac{2}{N}.
\]
The bystander contribution thus satisfies
\[
  \left|
  \frac{1}{N}
  \sum_{i \notin \{A,B\}} n_i(a_i' - a_i)
  \right|
  \le
  \frac{2(N-n_A-n_B)}{N^2}.
\]

\medskip
\noindent\textbf{Cluster $A$.}

Define $p = Q - n_A^2$ (the contribution to $Q$ from all non-$A$ clusters)
and $d = N - n_A$ (the total mass outside $A$), so that $S_A = p/d$.

After the move, $Q' = Q + 2(n_B - n_A + 1)$ and $n_A' = n_A - 1$.
Computing the numerator of $S_A'$ directly:
\[
  Q' - (n_A-1)^2
  = (Q - n_A^2) + (2n_A - 1) + 2(n_B - n_A + 1) = 
\]
\[
p + 2n_B + 1,
\]
and the denominator is $N - n_A' = d + 1$. Therefore
\[
  \Delta S_A
  = \frac{p + 2n_B + 1}{d+1} - \frac{p}{d}
  = \frac{d(2n_B+1) - p}{d(d+1)}.
\]
We bound the numerator using two constraints: $p \ge n_B^2$ (cluster $B$
alone contributes $n_B^2$ to $p$) and $p \le d^2$ (all non-$A$ mass is at
most $d$, so $\sum_{i \ne A} n_i^2 \le (\sum_{i \ne A} n_i)^2 = d^2$).

\emph{Upper bound.} Using $p \ge n_B^2$ and completing the square:
\[
  d(2n_B+1) - p
  \;\le\;
\]
\[
d(2n_B+1) - n_B^2 = -\bigl(n_B^2 - 2d\,n_B + d^2\bigr) + d^2 + d = 
\]
\[
 d(d+1) - (n_B - d)^2
  \;\le\;
  d(d+1),
\]
so $\Delta S_A \le 1$.

\emph{Lower bound.} Using $p \le d^2$ and $n_B \ge 0$:
\[
  d(2n_B+1) - p
  \;\ge\;
  d - d^2
  = -d(d-1)
  > -d(d+1),
\]
where the last inequality reduces to $2d > 0$, which holds since $d \ge 1$.
Hence $\Delta S_A > -1$.

Combining, $|\Delta S_A| \le 1$. Since $|\Delta n_A| = 1$, the argument of
the absolute value in $a_A$ shifts by at most $|\Delta n_A| + |\Delta S_A|
\le 2$, giving
\[
  |a_A' - a_A| \;\le\; \frac{2}{N}.
\]
Hence
\[
  \left|\frac{n_A-1}{N}(a_A' - a_A)\right|
  \;\le\;
  \frac{2(n_A-1)}{N^2}.
\]

\medskip
\noindent\textbf{Cluster $B$.}

Define $q = Q - n_B^2$ and $e = N - n_B$, so that $S_B = q/e$.
After the move, $Q' = Q + 2(n_B - n_A + 1)$ and $n_B' = n_B + 1$.
Computing the numerator of $S_B'$:
\[
  Q' - (n_B+1)^2
  = (Q - n_B^2) + 2(n_B - n_A + 1) - 2n_B - 1 =
\]
\[
 q - (2n_A - 1),
\]
and the denominator is $N - n_B' = e - 1$. Therefore
\[
  \Delta S_B
  = \frac{q - (2n_A-1)}{e-1} - \frac{q}{e}
  = \frac{q - e(2n_A-1)}{e(e-1)}.
\]
We bound the numerator using $q \ge n_A^2$ (cluster $A$ contributes $n_A^2$
to $q$) and $q \le e^2$, with $1 \le n_A \le e - 1$ (since
$n_B \le N - 2$ is required for $e - 1 \ge 1$; if $n_B = N-1$ then
$U = 2/N$ by~\ref{prop:2_cluster_dominance}, and after the move all $N$ points lie in $B$, meaning $U' = 0$ and $|\Delta U| = 2/N < 3/N$).

\emph{Upper bound.} Using $q \le e^2$ and $n_A \ge 1$:
\[
  q - e(2n_A-1)
  \;\le\;
  e^2 - e(2n_A - 1)
  = e\bigl(e - 2n_A + 1\bigr)
  \;\le\;
\]
\[
e(e-1).
\]
Hence $\Delta S_B \le 1$.

\emph{Lower bound.} Using $q \ge n_A^2$ and completing the square:
\[
  q - e(2n_A-1)
  \;\ge\;
\]
\[
n_A^2 - e(2n_A-1) = \bigl(n_A^2 - 2e\,n_A + e^2\bigr) - e^2 + e = 
\]
\[
(n_A - e)^2 - e(e-1)
  \;\ge\;
  -e(e-1),
\]
so $\Delta S_B \ge -1$.

Thus $|\Delta S_B| \le 1$, and by the same argument as for cluster $A$:
\[
  \left|\frac{n_B}{N}(a_B' - a_B)\right|
  \;\le\;
  \frac{2n_B}{N^2}.
\]

\medskip
\noindent\textbf{Endpoint weight term.}
Since $a_A,a_B' \in [0,1]$,
\[
\left|
\frac{a_B' - a_A}{N}
\right|
\le
\frac{1}{N}.
\]

\medskip
\noindent\textbf{Assembly.}
Combining the agreement-change terms,
\[
\frac{2}{N^2}\Big((N-n_A-n_B)+(n_A-1)+n_B\Big) =
\]
\[
\frac{2(N-1)}{N^2}.
\]
Adding the endpoint weight contribution gives
\[
|\Delta U|
\le
\frac{2(N-1)}{N^2} + \frac{1}{N}
=
\frac{3N-2}{N^2}
<
\frac{3}{N}.
\]

Thus a single reassignment changes $U$ by at most $3/N$, and applying this
bound over $\epsilon N$ moves yields
\[
|\Delta U| < 3\epsilon.
\]
\end{proof}

\subsection{Scale Invariance}
\begin{proposition}
Replacing each $n_i$ with $r n_i$ for any positive real number $r$ leaves $U$ unchanged.
\end{proposition}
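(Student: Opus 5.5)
The plan is to track how each ingredient of $U$ transforms under the substitution $n_i \mapsto r n_i$ with $r>0$, and to show that every agreement term and every weight is a ratio of quantities homogeneous of the same degree. The domain is preserved: the sizes stay positive and the number of nonempty clusters is unchanged. So the $K \ge 2$ case stays in the case where $U$ is defined by the formula, and the $K=1$ convention $U=0$ is trivially invariant. I will first record the scaled quantities.
\[
N' = \sum_i r n_i = rN, \qquad Q' = \sum_j (r n_j)^2 = r^2 Q .
\]

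Next I will compute the scaled leave-one-out baseline directly from its definition:
\[
S_i' = \frac{Q' - (r n_i)^2}{N' - r n_i} = \frac{r^2 (Q - n_i^2)}{r (N - n_i)} = r S_i .
\]
This says $S_i$ is homogeneous of degree one, exactly like $n_i$ and $N$. The denominator $N - n_i$ is nonzero before and after scaling because $K \ge 2$, as already noted in the range discussion.

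Then each agreement term is unchanged:
\[
1 - \frac{|r n_i - r S_i|}{rN} = 1 - \frac{|n_i - S_i|}{N},
\]
using $|r x| = r|x|$ for $r>0$. The weights $r n_i / (rN) = n_i/N$ are likewise unchanged. Summing gives $U' = U$.

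There is no real obstacle. The only points needing care are the positivity of $r$, which is used to pull it out of the absolute value, and the observation that the formula extends verbatim to real-valued sizes. The latter holds because the definition of $U$ uses only arithmetic operations on $n_i$, $N$, and $Q$, and never integrality.
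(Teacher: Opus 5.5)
Your proposal is correct and matches the paper's proof step for step: compute $N \mapsto rN$ and $Q \mapsto r^2 Q$, derive $S_i \mapsto r S_i$, then cancel the factors of $r$ in each weight and agreement term. Your added remarks are small refinements the paper leaves implicit: positivity of $r$ is what lets you pull it out of the absolute value, and the domain is preserved, including the $K=1$ convention.
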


\begin{proof}
Under the scaling $n_i \mapsto r n_i$, the total becomes $N \mapsto rN$ and $\sum_j n_j^2 \mapsto r^2 \sum_j n_j^2$. The leave-one-out baseline transforms as:
\[
  S_i \mapsto \frac{r^2(\sum_j n_j^2 - n_i^2)}{r(N - n_i)} = r S_i.
\]
Each term in $U$ then becomes:
\[
  \frac{r n_i}{r N}\left(1 - \frac{|r n_i - r S_i|}{r N}\right) = \frac{n_i}{N}\left(1 - \frac{|n_i - S_i|}{N}\right).
\]
The $r$ factors cancel exactly, term by term.
\end{proof}

\end{document}